\titleformat{\section}{\large\bfseries}{\thesection}{1em}{}
\titleformat{\subsection}{\normalsize\bfseries}{\thesubsection}{1em}{}
\title{\textbf{Perturbation Analysis of Singular Values in Concatenated Matrices}}
\author{
    Maksym Shamrai\thanks{Institute of Mathematics of NAS of Ukraine, 3 Tereshchenkivska Str., 01601 Kyiv, Ukraine. E-mail: m.shamrai@imath.kiev.ua}
}
\date{}
\newtheorem{theorem}{Theorem}
\newtheorem{corollary}{Corollary}
\newtheorem{proposition}{Proposition}
\newtheorem{lemma}{Lemma}
\newtheorem{definition}{Definition}
\begin{document}

\maketitle

\begin{abstract}

Concatenating matrices is a common technique for uncovering shared structures in data through singular value decomposition (SVD) and low-rank approximations. The fundamental question arises: How does the singular value spectrum of the concatenated matrix relate to the spectra of its individual components? In the present work, we develop a perturbation technique that extends classical results such as Weyl's inequality to concatenated matrices. We setup analytical bounds that quantify stability of singular values under small perturbations in submatrices. The results demonstrate that if submatrices are close in a norm, dominant singular values of the concatenated matrix remain stable enabling controlled trade-offs between accuracy and compression. These provide a theoretical basis for improved matrix clustering and compression strategies with applications in the numerical linear algebra, signal processing, and data-driven modeling.
\end{abstract}

\section{Introduction}

We focus on the so-called \emph{concatenated matrices}, which are defined as those formed by a horizontal (or vertical) stacking of a collection of submatrices. In particular, given $N$ matrices $\{A_i\}_{i=1}^N$, each with the same number of rows, their horizontal concatenation is as follows
\[
A = [A_1 \; A_2 \; \cdots \; A_N].
\]
Considering a concatenated matrix makes it possible to jointly analyze multiple original matrices by extracting a common basis that is especially useful for revealing shared structures in different data representations.

The singular value decomposition (SVD) is a cornerstone of the numerical linear algebra. It serve as a tool for the dimensionality reduction, principal component analysis, and data compression \cite{Schmidt1907, EckartYoung1936}. In many modern applications, the data is naturally represented by multiple matrices -- each representing different aspects of an underlying phenomenon. A common strategy to reveal the intrinsic structure of the data consists of concatenating these matrices and extracting the shared basis through SVD. 

The problem of compressing large datasets arises in diverse applications~\cite{TseViswanath2005,GramfortEtAl2013,McMahanEtAl2017,KairouzEtAl2021}. Our study is addressing the question of how the number of submatrices affects the compression performance. When compressing $N$ matrices, one can either compress each matrix individually or concatenate a subset (or all) of the original matrices and then compress the resulting block matrix. By compressing the concatenated matrix, one can identify and exploit a common basis among the submatrices that should lead to a higher compression rate. On the other hand, compressing the matrices separately may result in a lower compression rate since the redundancy and shared information among the submatrices are not leveraged as effectively.

\paragraph{Problem statement.}
Given matrices $\{A_i\}_{i=1}^k\subset\mathbb{R}^{m\times n}$ and their
concatenation
\[
  M \;=\; [\,A_1,\,A_2,\,\dots , A_k\,]\in\mathbb{R}^{m\times kn},
\]
suppose that each block is perturbed to $\widetilde A_i=A_i+E_i$,
thus producing $\widetilde M=M+E$. \emph{How do the dominant singular values of $\widetilde M$ deviate from
  those of $M$ in terms of the individual block perturbations
  $\{E_i\}$ and the blocks number $k$?}

Classical perturbation bounds are based on Weyl’s inequality \cite{Weyl1912}
and its revisions by Davis–Kahan and Stewart–Sun~\cite{Davis1963,StewartSun1990, Stewart1998}.
They estimate the change in singular values of \emph{single matrix}
under global perturbations and reveal neither (i) how the blockwise errors accumulate inside concatenations, nor (ii) whether using $M^TM$ or $MM^T$ leads to a sharper bounds, usually employed in practical applications.

\paragraph{Notation.}
Throughout the paper we write $\|\cdot\|_2$ for the Euclidean
$\ell_2$-norm when the argument is a vector and for the spectral
(operator) norm when the argument is a matrix.

\vspace{0.5cm}

Main results of the present paper are as follows:
\begin{itemize}
    \item \textbf{Sharper estimate by $MM^T$.}
            In Section \ref{sec:bond} we compare $M^TM$ and $MM^T$ and show that
            using the latter removes quadratic cross‑terms in global
            bounds that leads to a provably tighter constant and provides a theoretical justification for the heuristic choice adopted in large-scale SVD implementations.
  \item \textbf{Block‑wise perturbation bound.}  
        In Section \ref{sec:main} we prove that
        \[
          |\sigma_i(\widetilde M)-\sigma_i(M)|
          \;\le\;
          \frac{1}{\sigma_i(M)}
          \sum_{j=1}^k \bigl(2\|A_j\|_2\,\|E_j\|_2 + \|E_j\|_2^2\bigr), \quad i = 1, \dots , \operatorname{rank}(M),
        \]
        whereas other singular values that appear beyond the rank obey
        
        \[
          \sigma_{i}(\widetilde M)\le
          \sqrt{\sum_{j=1}^k(2\|A_j\|_2\,\|E_j\|_2+\|E_j\|_2^2)}, \quad i = \operatorname{rank}(M) + 1, \dots, \min(m,kn).
        \]
        
        To our best knowledge, these are the \emph{first deterministic bounds for concatenated matrix that explicitly depend on the blocks number $k$}.
  
  \item \textbf{Compression–stability trade‑off.}
        Also, in Section \ref{sec:main} we show that the bounds translate into a quantitative guideline for deciding how
        many blocks to group before applying a low‑rank approximation:
        grouping is beneficial as long as the accumulated right‑hand side
        remains below a user‑specified tolerance.
\end{itemize}

\section{Preliminaries}

We begin with Weyl's inequality~\cite{Weyl1912}, which gives a uniform bound on the eigenvalues of a symmetric matrix under perturbations.

\begin{theorem}[\cite{Weyl1912}]
\label{t:1}
Let \(A, B \in \mathbb{R}^{n \times n}\) be symmetric matrices and \(E = B - A\) is a perturbation. Then for every eigenvalue (ordered in any fixed manner),
\[
|\lambda_i(B) - \lambda_i(A)| \leq \|E\|_2.
\]
\end{theorem}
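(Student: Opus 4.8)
The plan is to derive the inequality from the Courant--Fischer variational characterization of eigenvalues, which keeps the argument self-contained and is the standard route. First I would fix the decreasing ordering $\lambda_1 \ge \lambda_2 \ge \cdots \ge \lambda_n$ (any consistent convention applied to both matrices yields the same conclusion) and recall that for a symmetric $C \in \mathbb{R}^{n\times n}$,
\[
\lambda_i(C) \;=\; \max_{\substack{V \subseteq \mathbb{R}^n \\ \dim V = i}} \; \min_{\substack{x \in V \\ \|x\|_2 = 1}} x^T C x \;=\; \min_{\substack{V \subseteq \mathbb{R}^n \\ \dim V = n-i+1}} \; \max_{\substack{x \in V \\ \|x\|_2 = 1}} x^T C x .
\]

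Second, the elementary observation is that for any unit vector $x$ the Rayleigh quotient of $B = A + E$ splits as $x^T B x = x^T A x + x^T E x$, and $|x^T E x| \le \|E x\|_2 \, \|x\|_2 \le \|E\|_2$ by Cauchy--Schwarz and the definition of the spectral norm. Hence $x^T A x - \|E\|_2 \le x^T B x \le x^T A x + \|E\|_2$ uniformly in $x$.

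Third, I would push these pointwise bounds through the min and the max. Taking an $i$-dimensional subspace $V$ that is optimal for $A$ in the max--min form, every unit $x \in V$ satisfies $x^T B x \ge x^T A x - \|E\|_2$, so $\min_{\|x\|_2=1,\,x\in V} x^T B x \ge \lambda_i(A) - \|E\|_2$, and maximizing over all such $V$ gives $\lambda_i(B) \ge \lambda_i(A) - \|E\|_2$. For the reverse direction I would simply swap the roles of $A$ and $B$: since $A = B + (-E)$ with $\|-E\|_2 = \|E\|_2$, the same bound gives $\lambda_i(A) \ge \lambda_i(B) - \|E\|_2$. Combining the two inequalities yields $|\lambda_i(B) - \lambda_i(A)| \le \|E\|_2$.

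There is no real obstacle here; the only point demanding care is the bookkeeping of subspace dimensions in the Courant--Fischer formula and ensuring the same ordering convention is used for $A$ and $B$ so that corresponding eigenvalues are compared. (One could alternatively prove a single direction via a dimension-counting argument on the spectral subspaces and invoke the $A \leftrightarrow B$ symmetry for the other, but the min--max route above is the most economical.)
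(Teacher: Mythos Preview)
Your argument via the Courant--Fischer min--max principle is correct and is the standard proof of Weyl's inequality. The paper does not actually prove this theorem: it is stated in the Preliminaries as a classical result with a citation to \cite{Weyl1912}, so there is no in-paper proof to compare your approach against.
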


Weyl's inequality assures that, if perturbation \(E\) is small in the spectral norm, then eigenvalues of the perturbed matrix \(B\) remain close to those of the original matrix \(A\). The property becomes important when we further examine how singular values are affected by perturbations in concatenated matrices.

The following proposition, along with its proof, is discussed in \cite{Horn_Johnson_1985}. It relates singular values of a matrix to the eigenvalues of its Gram matrix.

\begin{proposition}[\cite{Horn_Johnson_1985}]
\label{r:1}
For any matrix \(A \in \mathbb{R}^{n \times m}\) with rank \(r\), the singular values of \(A\) are given by
\[
\sigma_i(A) = \sqrt{\lambda_i(A^T A)}, \quad 1 \leq i \leq r.
\]
\end{proposition}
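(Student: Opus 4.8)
The plan is to derive the identity directly from the singular value decomposition of $A$. First I would invoke the existence of an SVD: there are orthogonal matrices $U \in \mathbb{R}^{n \times n}$ and $V \in \mathbb{R}^{m \times m}$ together with a rectangular diagonal matrix $\Sigma \in \mathbb{R}^{n \times m}$ whose diagonal entries are $\sigma_1(A) \geq \cdots \geq \sigma_r(A) > 0$ followed by zeros, such that $A = U \Sigma V^T$.

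Next I would form the Gram matrix and simplify using the orthogonality of $U$:
\[
  A^T A = (U\Sigma V^T)^T (U \Sigma V^T) = V \Sigma^T U^T U \Sigma V^T = V (\Sigma^T \Sigma) V^T .
\]
The matrix $\Sigma^T \Sigma \in \mathbb{R}^{m \times m}$ is diagonal with entries $\sigma_1(A)^2, \dots, \sigma_r(A)^2, 0, \dots, 0$. Since $V$ is orthogonal, the displayed identity is an eigendecomposition of the symmetric positive semidefinite matrix $A^T A$; hence its eigenvalues, listed in non-increasing order, are exactly $\lambda_i(A^T A) = \sigma_i(A)^2$ for $1 \leq i \leq r$ and $\lambda_i(A^T A) = 0$ for $r < i \leq m$. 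In particular $A^T A$ has rank $r$, consistent with $\operatorname{rank}(A^T A) = \operatorname{rank}(A)$. Taking square roots — legitimate because every eigenvalue is non-negative — yields $\sigma_i(A) = \sqrt{\lambda_i(A^T A)}$ for $1 \leq i \leq r$.

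I do not anticipate a genuine obstacle, as this is a classical fact; the only points requiring care are bookkeeping ones: $\Sigma$ is rectangular, so one must track the zero padding when passing between $\Sigma^T\Sigma$ (of size $m$) and the $r$ nonzero singular values, and one must ensure the ordering conventions for $\sigma_i(A)$ and $\lambda_i(A^TA)$ agree (both taken non-increasing). If one instead wished to avoid presupposing the SVD, the alternative route is to start from the spectral theorem applied to the symmetric positive semidefinite matrix $A^T A$, set $\sigma_i := \sqrt{\lambda_i(A^T A)}$, build $V$ from an orthonormal eigenbasis, and define the left singular vectors by $u_i := A v_i / \sigma_i$ for the indices with positive eigenvalue; verifying that these $u_i$ are orthonormal and that $A = \sum_{i=1}^r \sigma_i u_i v_i^T$ recovers both the SVD and the claimed identity. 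Either way the content is the same, so I would present the short SVD-based argument.
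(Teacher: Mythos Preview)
Your argument is correct and is exactly the standard textbook derivation: write $A=U\Sigma V^T$, compute $A^TA=V(\Sigma^T\Sigma)V^T$, and read off the eigenvalues. The paper does not supply its own proof of this proposition but defers to \cite{Horn_Johnson_1985}, where the same SVD-based reasoning (or, equivalently, your alternative spectral-theorem construction) is what one finds; so your proposal matches the intended reference.
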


Consider now a matrix formed by the concatenation of the same copy of matrix $A$. The structure of the constructed matrix induces a certain regularity that we can exploit.

\begin{proposition}
\label{r:2}
Let \(A \in \mathbb{R}^{m \times n}\) be a matrix with the \(\operatorname{rank}(A)=r\). Define the concatenated matrix
\[
M \;=\; [\,A,\,A,\,\dots , A\,] \in \mathbb{R}^{m \times kn},
\]
which consists of \(k\) copies of \(A\). Define
\[
S_1 = M^T M \quad \text{and} \quad S_2 = M M^T.
\]
Then:
\begin{itemize}
    \item The nonzero eigenvalues of both \(S_1\) and \(S_2\) are as follows
    \[
    \lambda_i(S_1) = \lambda_i(S_2) = k\, \lambda_i(A^T A), \quad 1 \leq i \leq r.
    \]
    \item The remaining eigenvalues are the zero, i.e.,
    \[
    \lambda_i(S_1) = 0 \quad \text{for } r < i \leq kn,
    \]
    \[
    \lambda_i(S_2) = 0 \quad \text{for } r < i \leq m.
    \]
\end{itemize}
\end{proposition}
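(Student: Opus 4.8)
The plan is to compute $S_1 = M^TM$ and $S_2 = MM^T$ directly from the block structure and reduce each to an eigenvalue problem already understood via Proposition~\ref{r:1}. First I would observe that, writing $M = [A, A, \dots, A]$ with $k$ copies, the product $MM^T$ telescopes neatly: each block contributes $AA^T$ and there are no cross terms between distinct horizontal blocks, so $S_2 = MM^T = \sum_{j=1}^k AA^T = k\,AA^T$. Hence $\lambda_i(S_2) = k\,\lambda_i(AA^T)$ for all $i$, and since $AA^T$ and $A^TA$ share the same nonzero eigenvalues (a standard fact, consistent with Proposition~\ref{r:1}), we get $\lambda_i(S_2) = k\,\lambda_i(A^TA)$ for $1 \le i \le r$ and $\lambda_i(S_2) = 0$ for $r < i \le m$. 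This disposes of the $S_2$ half immediately.

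For $S_1 = M^TM$, the matrix is the $kn \times kn$ block matrix whose $(p,q)$ block equals $A^TA$ for every pair $(p,q)$, i.e. $S_1 = J_k \otimes (A^TA)$ where $J_k$ is the $k \times k$ all-ones matrix. The plan is then to use the spectrum of a Kronecker product: the eigenvalues of $J_k \otimes (A^TA)$ are the products $\mu \cdot \lambda_i(A^TA)$ over eigenvalues $\mu$ of $J_k$ and eigenvalues $\lambda_i$ of $A^TA$. Since $J_k$ has eigenvalue $k$ with multiplicity one and eigenvalue $0$ with multiplicity $k-1$, the nonzero eigenvalues of $S_1$ are exactly $k\,\lambda_i(A^TA)$ for $1 \le i \le r$, and all other $kn - r$ eigenvalues vanish. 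Alternatively, and perhaps cleaner to present without invoking Kronecker machinery, I would exhibit the eigenvectors explicitly: if $v$ is an eigenvector of $A^TA$ with eigenvalue $\lambda$, then the stacked vector $(v^T, v^T, \dots, v^T)^T \in \mathbb{R}^{kn}$ is an eigenvector of $S_1$ with eigenvalue $k\lambda$, while any stacked vector whose blocks sum to zero lies in the kernel of $S_1$; counting dimensions ($r$ from the first family, $kn - r$ worth of kernel) accounts for the full spectrum.

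Finally I would reconcile the two computations: both $S_1$ and $S_2$ have precisely the nonzero eigenvalues $k\,\lambda_i(A^TA)$, $1 \le i \le r$, with the surplus eigenvalues being zero — $kn - r$ of them for $S_1$ (acting on $\mathbb{R}^{kn}$) and $m - r$ of them for $S_2$ (acting on $\mathbb{R}^m$). I do not anticipate a serious obstacle here; the only mild subtlety is being careful about the ordering convention for eigenvalues (so that $\lambda_i(S_1)$, $\lambda_i(S_2)$, and $\lambda_i(A^TA)$ are matched consistently, sorted in nonincreasing order) and about the claim that $AA^T$ and $A^TA$ have identical nonzero spectra with matching multiplicities, which is standard and already implicit in Proposition~\ref{r:1}. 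The main "work," such as it is, is simply recognizing the all-ones block pattern in $M^TM$ and identifying its eigenstructure.
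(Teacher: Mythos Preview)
Your argument is correct. The treatment of $S_2$ is essentially identical to the paper's: you compute $MM^T = k\,AA^T$ and invoke the fact that $AA^T$ and $A^TA$ share nonzero eigenvalues.

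Where you diverge is in handling $S_1$. You compute $M^TM$ explicitly as the block matrix $J_k \otimes (A^TA)$ and then either appeal to the Kronecker spectrum or exhibit eigenvectors by hand. The paper instead bypasses any direct computation of $S_1$: having settled $S_2$, it simply reapplies the same general principle already used once---namely that $X^TX$ and $XX^T$ have identical nonzero eigenvalues---now with $X=M$, so the nonzero spectrum of $S_1=M^TM$ automatically matches that of $S_2=MM^T$. This is shorter and avoids the Kronecker machinery entirely. Your route, on the other hand, yields more: it gives the eigenvectors of $S_1$ explicitly and makes transparent \emph{why} the extra $kn-r$ zero eigenvalues appear (the $(k-1)n$-dimensional ``blocks sum to zero'' subspace plus the $n-r$-dimensional kernel coming from $A^TA$). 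For the proposition as stated, the paper's shortcut suffices; your approach would be preferable if one later needed the eigenvectors of $M^TM$ rather than just the eigenvalues.
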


\begin{proof}
Note that
\[
S_2 = M M^T = [\,A,\,A,\,\dots , A\,]
\begin{bmatrix} A^T \\ A^T \\ \vdots \\ A^T \end{bmatrix} = k\, AA^T.
\]
If \(x_i\) is an eigenvector of \(A^T A\) corresponding to the eigenvalue \(\lambda_i(A^T A)\), then due to properties of the SVD, the nonzero eigenvalues of \(AA^T\) coincide with those of \(A^T A\). Consequently,
\[
S_2\, x_i = k\, AA^T\, x_i = k\, \lambda_i(A^T A) \, x_i, \quad i=1,\dots,r.
\]
Therefore, 
\[
\lambda_i(S_2) = k\, \lambda_i(A^T A), \quad i=1,\dots,r,
\]
with the remaining eigenvalues being zero. Since \(M^T M\) and \(MM^T\) share the same nonzero eigenvalues, the claim for \(S_1\) follows.
\end{proof}

The following proposition establishes a bound for a block matrix. This bound will be applied when analyzing block-structured perturbations.

\begin{proposition}[Bounding the Norm of a Block Matrix]
\label{r:3}
Let 
\[
E = \begin{bmatrix} 
E_{11} & E_{12} & \dots & E_{1k} \\ 
E_{21} & E_{22} & \dots & E_{2k} \\ 
\vdots & \vdots & \ddots & \vdots \\ 
E_{k1} & E_{k2} & \dots & E_{kk} 
\end{bmatrix}
\]
be a block matrix. Then,
\[
\|E\|_2 \le \sqrt{\sum_{i=1}^{k} \sum_{j=1}^{k} \|E_{ij}\|_2^2}.
\]
\end{proposition}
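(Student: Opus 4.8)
The plan is to bound $\|E\|_2$ directly from its variational definition as the operator norm, $\|E\|_2 = \sup_{\|x\|_2 = 1}\|Ex\|_2$, exploiting the block structure. Partition a vector $x$ conformally with the block columns of $E$ as $x = (x_1^{T}, \dots, x_k^{T})^{T}$, so that $\|x\|_2^2 = \sum_{j=1}^{k}\|x_j\|_2^2$ (the hypothesis that $E$ is a genuine block matrix is exactly what guarantees the column dimensions are consistent within each block column, so each $x_j$ is well defined). The $i$-th block of $Ex$ is then $\sum_{j=1}^{k} E_{ij}x_j$, whence
\[
\|Ex\|_2^2 \;=\; \sum_{i=1}^{k}\Bigl\|\sum_{j=1}^{k} E_{ij}x_j\Bigr\|_2^2 .
\]

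The next step is to estimate each inner block. By the triangle inequality and the definition of the spectral norm, $\bigl\|\sum_{j} E_{ij}x_j\bigr\|_2 \le \sum_{j}\|E_{ij}\|_2\,\|x_j\|_2$, and applying the Cauchy--Schwarz inequality to the sequences $(\|E_{ij}\|_2)_{j}$ and $(\|x_j\|_2)_{j}$ gives
\[
\Bigl\|\sum_{j=1}^{k} E_{ij}x_j\Bigr\|_2^2 \;\le\; \Bigl(\sum_{j=1}^{k}\|E_{ij}\|_2^2\Bigr)\Bigl(\sum_{j=1}^{k}\|x_j\|_2^2\Bigr) \;=\; \Bigl(\sum_{j=1}^{k}\|E_{ij}\|_2^2\Bigr)\|x\|_2^2 .
\]
Summing over $i$ and using $\|x\|_2 = 1$ yields $\|Ex\|_2^2 \le \sum_{i=1}^{k}\sum_{j=1}^{k}\|E_{ij}\|_2^2$; taking the supremum over unit $x$ and then the square root gives the claimed inequality.

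An equivalent and slightly more structured route, which I would note as a remark, is to introduce the $k\times k$ nonnegative ``norm matrix'' $\widehat E = (\|E_{ij}\|_2)_{i,j}$ and establish the chain $\|E\|_2 \le \|\widehat E\|_2 \le \|\widehat E\|_F$, where the last quantity is precisely the right-hand side of the statement. The middle inequality follows from the same block-vector computation applied to $\widehat E$ acting on the vector $(\|x_j\|_2)_{j}$, and $\|\widehat E\|_2 \le \|\widehat E\|_F$ is the standard bound of the spectral norm by the Frobenius norm; this version has the mild advantage of isolating where the possible slack in the estimate comes from.

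Since every inequality invoked (triangle inequality, sub-multiplicativity of the operator norm, Cauchy--Schwarz) is elementary, I do not expect a genuine obstacle. The only point that needs care is the conformal bookkeeping of the partition of $x$ when the blocks $E_{ij}$ are rectangular rather than square: one must check that the block decomposition $\|Ex\|_2^2 = \sum_i \|(Ex)_i\|_2^2$ and the splitting $\|x\|_2^2 = \sum_j \|x_j\|_2^2$ are compatible with the row/column partitions of $E$, which is automatic from the definition of a block matrix.
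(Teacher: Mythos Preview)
Your argument is correct and is essentially identical to the paper's own proof: partition $x$ conformally, bound each block $\sum_j E_{ij}x_j$ by the triangle inequality and submultiplicativity, apply Cauchy--Schwarz, sum over $i$, and take the supremum. Your additional remark about the auxiliary matrix $\widehat E = (\|E_{ij}\|_2)_{i,j}$ and the chain $\|E\|_2 \le \|\widehat E\|_2 \le \|\widehat E\|_F$ is a nice repackaging but goes slightly beyond what the paper records.
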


\begin{proof}
For any nonzero vector
\[
x = \begin{bmatrix} x_1 \\ x_2 \\ \vdots \\ x_k \end{bmatrix},
\]
we have
\[
Ex = \begin{bmatrix} 
\sum_{j=1}^k E_{1j} x_j \\[1mm]
\sum_{j=1}^k E_{2j} x_j \\[1mm]
\vdots \\[1mm]
\sum_{j=1}^k E_{kj} x_j 
\end{bmatrix}.
\]
Define \(y_i = \sum_{j=1}^{k} E_{ij} x_j\) for \(i=1,\dots,k\). The triangle inequality and submultiplicative property of the spectral norm deduce
\[
\|y_i\|_2 \le \sum_{j=1}^{k} \|E_{ij} x_j\|_2 \le \sum_{j=1}^{k} \|E_{ij}\|_2 \|x_j\|_2
\]
and, according to the Cauchy–Schwarz inequality,
\[
\|y_i\|_2^2 \le \left(\sum_{j=1}^{k} \|E_{ij}\|_2^2\right) \left(\sum_{j=1}^{k} \|x_j\|_2^2\right).
\]
Because \(\|x\|_2^2 = \sum_{j=1}^{k} \|x_j\|_2^2\), summation over \(i\) yields
\[
\|Ex\|_2^2 = \sum_{i=1}^{k} \|y_i\|_2^2 \le \left(\sum_{i=1}^{k} \sum_{j=1}^{k} \|E_{ij}\|_2^2\right) \|x\|_2^2,
\]
so that taking the square root and subsequently maximizing over all \(x \neq 0\) gives
\[
\|E\|_2 \le \sqrt{\sum_{i=1}^{k} \sum_{j=1}^{k} \|E_{ij}\|_2^2}.
\]
\end{proof}

\section{The Bound Estimation}
\label{sec:bond}

In this section, we establish an upper bound on the perturbation error for a concatenated matrix.

Let \( \{A_i\}_{i=1}^k \) be a collection of matrices with \( A_i \in \mathbb{R}^{m \times n} \) for each \( i = 1, \dots , k \). Define the original concatenated matrix and its perturbed one as
\[
M = [A_1, A_2, \dots, A_k] \quad \text{and} \quad \widetilde{M} = [\widetilde{A}_1, \widetilde{A}_2, \dots, \widetilde{A}_k],
\]
respectively, so that the perturbation is
\[
E = \widetilde{M} - M = [E_1, E_2, \dots, E_k],
\]
where \( E_i = \widetilde{A}_i - A_i \) for \( i = 1, \dots , k \).

The following lemmas characterize the perturbation structure.

\begin{lemma} \label{l:1}
The perturbation term \( D = \widetilde{M}^T \widetilde{M} - M^T M \) has the upper bound
\[
\|D\|_2 \leq \sqrt{ \sum_{i=1}^k \sum_{j=1}^k \big( \|A_i\|_2 \|E_j\|_2 + \|E_i\|_2 \|A_j\|_2 + \|E_i\|_2 \|E_j\|_2 \big)^2 }.
\]

\end{lemma}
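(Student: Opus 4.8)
The plan is to exploit the block structure that concatenation imposes on the Gram matrix, reduce the bound on $\|D\|_2$ to a bound on the individual blocks, and then estimate each block by the triangle inequality together with submultiplicativity of the spectral norm. Proposition~\ref{r:3} is exactly the tool that converts blockwise control into a global bound, so the whole argument will funnel into that proposition.

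\textbf{Step 1: Identify the block structure of $D$.} Since $M=[A_1,\dots,A_k]$ with each $A_i\in\mathbb{R}^{m\times n}$, the product $M^T M\in\mathbb{R}^{kn\times kn}$ is the $k\times k$ block matrix whose $(i,j)$ block is $A_i^T A_j$. Likewise, writing $\widetilde A_i=A_i+E_i$, the $(i,j)$ block of $\widetilde M^T\widetilde M$ is
\[
\widetilde A_i^T\widetilde A_j=(A_i+E_i)^T(A_j+E_j)=A_i^T A_j+A_i^T E_j+E_i^T A_j+E_i^T E_j.
\]
Subtracting, the $(i,j)$ block of $D=\widetilde M^T\widetilde M-M^T M$ is
\[
D_{ij}=A_i^T E_j+E_i^T A_j+E_i^T E_j.
\]

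\textbf{Step 2: Bound each block.} Applying the triangle inequality for the spectral norm and then submultiplicativity, together with the identity $\|X^T\|_2=\|X\|_2$, gives
\[
\|D_{ij}\|_2\le\|A_i^T E_j\|_2+\|E_i^T A_j\|_2+\|E_i^T E_j\|_2\le\|A_i\|_2\|E_j\|_2+\|E_i\|_2\|A_j\|_2+\|E_i\|_2\|E_j\|_2.
\]

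\textbf{Step 3: Assemble via Proposition~\ref{r:3}.} Viewing $D$ as a $k\times k$ block matrix with blocks $D_{ij}$, Proposition~\ref{r:3} yields $\|D\|_2\le\sqrt{\sum_{i,j}\|D_{ij}\|_2^2}$. Substituting the per-block bound from Step 2 and monotonicity of $t\mapsto t^2$ on $[0,\infty)$ gives precisely
\[
\|D\|_2\le\sqrt{\sum_{i=1}^k\sum_{j=1}^k\bigl(\|A_i\|_2\|E_j\|_2+\|E_i\|_2\|A_j\|_2+\|E_i\|_2\|E_j\|_2\bigr)^2}.
\]

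\textbf{Anticipated difficulty.} There is no deep obstacle here; the only point requiring care is the bookkeeping in Step~1 — correctly identifying that the $(i,j)$ block of a product of concatenated matrices is the product of the corresponding panels, and that the cross terms organize into exactly the three summands above. Everything else is a routine application of norm inequalities and the already-established block-norm bound.
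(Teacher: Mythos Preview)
Your proof is correct and follows essentially the same approach as the paper: identify the $(i,j)$ block of $D$ as $A_i^T E_j+E_i^T A_j+E_i^T E_j$, bound each block via the triangle inequality and submultiplicativity, and then invoke Proposition~\ref{r:3}. The only cosmetic addition is your explicit mention of $\|X^T\|_2=\|X\|_2$ and the monotonicity of $t\mapsto t^2$, which the paper leaves implicit.
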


\begin{proof}
Recalling \( \widetilde{A}_i = A_i + E_i \) implies
\[
\widetilde{M}^T \widetilde{M} = \begin{bmatrix} (A_1 + E_1)^T \\ \vdots \\ (A_k + E_k)^T \end{bmatrix} [(A_1 + E_1), \dots, (A_k + E_k)]
\]
so that expanding the product yields
\[
\widetilde{M}^T \widetilde{M} = \begin{bmatrix} (A_1 + E_1)^T (A_1 + E_1) & \dots & (A_1 + E_1)^T (A_k + E_k) \\ \vdots & \ddots & \vdots \\ (A_k + E_k)^T (A_1 + E_1) & \dots & (A_k + E_k)^T (A_k + E_k) \end{bmatrix},
\]
where each block takes the form
\[
(A_i + E_i)^T (A_j + E_j) = A_i^T A_j + A_i^T E_j + E_i^T A_j + E_i^T E_j.
\]
Therefore, the unperturbed term takes the form
\[
M^T M = \begin{bmatrix} A_1^T A_1 & \dots & A_1^T A_k \\ \vdots & \ddots & \vdots \\ A_k^T A_1 & \dots & A_k^T A_k \end{bmatrix},
\]
and the perturbation matrix \( D \) reads as
\[
D = \begin{bmatrix} A_1^T E_1 + E_1^T A_1 + E_1^T E_1 & \dots & A_1^T E_k + E_1^T A_k + E_1^T E_k \\ \vdots & \ddots & \vdots \\ A_k^T E_1 + E_k^T A_1 + E_k^T E_1 & \dots & A_k^T E_k + E_k^T A_k + E_k^T E_k \end{bmatrix} = 
\begin{bmatrix} D_{11} & \dots & D_{1k} \\ \vdots & \ddots & \vdots \\ D_{k1} & \dots & D_{kk} \end{bmatrix},
\]
where the individual block components are \( D_{ij} = A_i^T E_j + E_i^T A_j + E_i^T E_j \).

By applying the triangle inequality and the submultiplicative property of the norm,
\[
\|D_{ij}\|_2 \leq \|A_i^T E_j\|_2 + \|E_i^T A_j\|_2 + \|E_i^T E_j\|_2 \leq \|A_i\|_2 \|E_j\|_2 + \|E_i\|_2 \|A_j\|_2 + \|E_i\|_2 \|E_j\|_2.
\]
By Proposition \ref{r:3},
\[
\|D\|_2 \leq \sqrt{ \sum_{i=1}^k \sum_{j=1}^k \|D_{ij}\|_2^2 },
\]
leading to
\[
\|D\|_2 \leq \sqrt{ \sum_{i=1}^k \sum_{j=1}^k \big( \|A_i\|_2 \|E_j\|_2 + \|E_i\|_2 \|A_j\|_2 + \|E_i\|_2 \|E_j\|_2 \big)^2 }.
\]
\end{proof}

\begin{lemma}
\label{l:2}
The perturbation term \(D = \widetilde{M} \widetilde{M}^T - M M^T\) has the upper bound
\[
\|D\|_2 \leq \sum_{i=1}^k \left( 2 \|A_i\|_2 \|E_i\|_2 + \|E_i\|_2^2 \right).
\]

\end{lemma}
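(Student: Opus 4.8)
The plan is to exploit the key algebraic simplification that distinguishes $MM^T$ from $M^TM$: whereas $\widetilde M^T\widetilde M$ is a $k\times k$ block matrix whose off-diagonal blocks contribute cross-terms, the product $\widetilde M\widetilde M^T$ collapses to a single $m\times m$ matrix, namely $\widetilde M\widetilde M^T=\sum_{i=1}^k \widetilde A_i\widetilde A_i^T$. First I would write $M M^T=\sum_{i=1}^k A_iA_i^T$ and $\widetilde M\widetilde M^T=\sum_{i=1}^k(A_i+E_i)(A_i+E_i)^T$, so that
\[
D=\widetilde M\widetilde M^T-MM^T=\sum_{i=1}^k\bigl(A_iE_i^T+E_iA_i^T+E_iE_i^T\bigr).
\]
No block-matrix bound (Proposition \ref{r:3}) is needed here — that is precisely the point of comparing with Lemma \ref{l:1}.

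Next I would apply the triangle inequality over the sum of $k$ terms and then, within each term, the triangle inequality again together with the submultiplicativity of the spectral norm and the identity $\|E_i^T\|_2=\|E_i\|_2$ (and likewise $\|A_i^T\|_2=\|A_i\|_2$):
\[
\|D\|_2\le\sum_{i=1}^k\Bigl(\|A_iE_i^T\|_2+\|E_iA_i^T\|_2+\|E_iE_i^T\|_2\Bigr)
\le\sum_{i=1}^k\bigl(2\|A_i\|_2\|E_i\|_2+\|E_i\|_2^2\bigr),
\]
which is exactly the claimed bound.

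The only subtlety — and the reason the statement is genuinely sharper than a naive reduction from Lemma \ref{l:1} — is that the summation index collapses to the diagonal $i=i$ rather than running over all pairs $(i,j)$; this is an exact consequence of the column-block structure of $M$ and requires nothing more than carrying out the matrix product $\widetilde M\widetilde M^T$ explicitly before estimating. There is no real obstacle: the proof is a short computation followed by two applications of the triangle inequality. If anything, the step worth stating carefully is the first one, making the $\sum_i \widetilde A_i\widetilde A_i^T$ expansion transparent so that the reader sees why cross-terms $A_iE_j^T$ with $i\neq j$ never appear.
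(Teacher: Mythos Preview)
Your proposal is correct and follows essentially the same approach as the paper: expand $\widetilde M\widetilde M^T=\sum_i(A_i+E_i)(A_i+E_i)^T$, subtract $MM^T=\sum_i A_iA_i^T$ to obtain $D=\sum_i(A_iE_i^T+E_iA_i^T+E_iE_i^T)$, and then apply the triangle inequality and submultiplicativity termwise. Your added remarks about why no cross-terms $A_iE_j^T$ with $i\neq j$ appear and why Proposition~\ref{r:3} is unnecessary here are accurate and helpful context.
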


\begin{proof}
We begin with expressing \(\widetilde{M} \widetilde{M}^T\) based on the concatenated form of \(\widetilde{M}\), i.e.
\[
\widetilde{M} \widetilde{M}^T = [\widetilde{A}_1, \dots, \widetilde{A}_k] \begin{bmatrix} \widetilde{A}_1^T \\ \vdots \\ \widetilde{A}_k^T \end{bmatrix}
\]
that leads to the sum
\[
\widetilde{M} \widetilde{M}^T = \sum_{i=1}^k \widetilde{A}_i \widetilde{A}_i^T.
\]
Next, recall that \(\widetilde{A}_i = A_i + E_i\), thus, one can expand \(\widetilde{A}_i \widetilde{A}_i^T\) as follows
\[
\widetilde{A}_i \widetilde{A}_i^T = (A_i + E_i)(A_i + E_i)^T = A_i A_i^T + A_i E_i^T + E_i A_i^T + E_i E_i^T.
\]
As a consequence, we can write the sum as
\[
\widetilde{M} \widetilde{M}^T = \sum_{i=1}^k \left( A_i A_i^T + A_i E_i^T + E_i A_i^T + E_i E_i^T \right).
\]
The original matrix \(M M^T = \sum_{i=1}^k A_i A_i^T\), therefore, the perturbation term 
\[
D = \widetilde{M} \widetilde{M}^T - M M^T = \sum_{i=1}^k \left( A_i E_i^T + E_i A_i^T + E_i E_i^T \right).
\]
To estimate the bound \(\|D\|_2\), we consequently apply the triangle inequality and the submultiplicative property of the spectral norm:
\[
\|D\|_2 \leq \sum_{i=1}^k (\|A_i E_i^T\|_2 + \|E_i A_i^T\|_2 + \|E_i E_i^T\|_2) \leq \sum_{i=1}^k \left( 2 \|A_i\|_2 \|E_i\|_2 + \|E_i\|_2^2 \right).
\]
\end{proof}

Through Lemmas \ref{l:1} and \ref{l:2}, we derived explicit bounds for perturbations in the matrices \( M^T M \) and \( M M^T \), which serve as the key components in understanding the stability of these matrices under small changes. The results demonstrate that the perturbation in \( M M^T \) is less significant compared to \( M^T M \), as it involves only diagonal blocks in its perturbation structure. This distinction provides a tighter bound for \( M M^T \), offering stronger stability guarantees.

\section{Main Result}
\label{sec:main}

\begin{theorem}[Singular Value Perturbation of Concatenated Matrix]
\label{t:2}

Let $\{A_i\}_{i=1}^k$ be a collection of matrices with $A_i \in \mathbb{R}^{m \times n}$ for all $i = 1, \dots, k$. Define the original concatenated matrix by
\[
M = [A_1, \dots, A_k] \in \mathbb{R}^{m \times kn}, \quad r=\operatorname{rank}(M).
\]
Also, let $\widetilde{M} = [\widetilde{A}_1, \dots, \widetilde{A}_k]$ be a perturbed version of $M$, and define the perturbation matrix
\[
E = \widetilde{M} - M = [E_1, \dots, E_k], \quad \text{with } E_i = \widetilde{A}_i - A_i \text{ for each } i = 1, \dots, k.
\]
Then, the following perturbation bounds for the singular values hold true:
\begin{itemize}
    \item For $i = 1, \dots, r$ (corresponding to the nonzero singular values of $M$),
    \[
    |\sigma_i(\widetilde{M}) - \sigma_i(M)| \leq \frac{1}{\sigma_i(M)} \sum_{j=1}^k \Bigl(2 \|A_j\|_2 \|E_j\|_2 + \|E_j\|_2^2\Bigr).
    \]
    \item For $i = r+1, \dots, \min(m,kn)$ (corresponding to the zero singular values of $M$),
    \[
    \sigma_i(\widetilde{M}) \leq \sqrt{\sum_{j=1}^k \Bigl(2 \|A_j\|_2 \|E_j\|_2 + \|E_j\|_2^2\Bigr)}.
    \]
\end{itemize}
\end{theorem}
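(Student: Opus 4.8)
The plan is to reduce the problem to an eigenvalue perturbation statement for the symmetric Gram matrices and then invoke Weyl's inequality. The decisive choice is to work with the $m\times m$ matrices $MM^T$ and $\widetilde M\widetilde M^T$ rather than with $M^TM$ and $\widetilde M^T\widetilde M$: the two pairs have the same nonzero spectrum, but by Lemma \ref{l:2} the perturbation $D=\widetilde M\widetilde M^T-MM^T$ involves only the diagonal blocks and satisfies $\|D\|_2\le C$, where $C:=\sum_{j=1}^k\bigl(2\|A_j\|_2\|E_j\|_2+\|E_j\|_2^2\bigr)$, whereas Lemma \ref{l:1} would leave the quadratic cross terms in play. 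This is exactly the ``sharper estimate by $MM^T$'' advertised in the introduction, so the proof should be built on Lemma \ref{l:2}.

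First I would record, by Proposition \ref{r:1} applied to $M^T$ (equivalently, the standard identification of singular values with square roots of Gram eigenvalues together with the zero-padding convention past the rank), that $\sigma_i(M)^2=\lambda_i(MM^T)$ and $\sigma_i(\widetilde M)^2=\lambda_i(\widetilde M\widetilde M^T)$ for every $i=1,\dots,m$, where the eigenvalues of each symmetric positive semidefinite matrix are listed in decreasing order, consistently on both sides. Since $D$ is symmetric, Weyl's inequality (Theorem \ref{t:1}) gives $|\lambda_i(\widetilde M\widetilde M^T)-\lambda_i(MM^T)|\le\|D\|_2$, and Lemma \ref{l:2} bounds the right-hand side by $C$. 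Hence $|\sigma_i(\widetilde M)^2-\sigma_i(M)^2|\le C$ for all $i$, which is the common core of both claimed bounds.

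Next I would split on the index. For $i=1,\dots,r$ we have $\sigma_i(M)>0$, so factoring $\sigma_i(\widetilde M)^2-\sigma_i(M)^2=\bigl(\sigma_i(\widetilde M)-\sigma_i(M)\bigr)\bigl(\sigma_i(\widetilde M)+\sigma_i(M)\bigr)$ and using $\sigma_i(\widetilde M)\ge0$, whence $\sigma_i(\widetilde M)+\sigma_i(M)\ge\sigma_i(M)>0$, I would divide to obtain $|\sigma_i(\widetilde M)-\sigma_i(M)|\le C/\sigma_i(M)$, the first bound. For $i=r+1,\dots,\min(m,kn)$ we have $\sigma_i(M)=0$, i.e. $\lambda_i(MM^T)=0$ (because $\operatorname{rank}(MM^T)=\operatorname{rank}(M)=r$ and $MM^T\succeq 0$), so $\sigma_i(\widetilde M)^2=\lambda_i(\widetilde M\widetilde M^T)\le C$ directly, and taking square roots yields the second bound.

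The step requiring the most care is the index/rank bookkeeping rather than any estimate: one must fix the decreasing ordering on both Gram matrices so that Weyl applies term by term, confirm the identity $\sigma_i^2=\lambda_i(MM^T)$ over the full range of indices in play (hence the zero padding past rank $r$), and note that $\min(m,kn)\le m$ so that the $m\times m$ matrix $MM^T$ genuinely supplies every eigenvalue referenced — when $kn<m$ the surplus indices again correspond to $\lambda_i(MM^T)=0$ and the second bound still applies verbatim. The remaining difference-of-squares manipulation is routine.
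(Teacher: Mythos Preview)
Your proposal is correct and follows essentially the same route as the paper: apply Weyl's inequality to the Gram matrices, use Lemma~\ref{l:2} to bound $\|\widetilde M\widetilde M^T-MM^T\|_2$ by $C$, then factor the difference of squares for $i\le r$ and take a square root for $i>r$. The paper additionally records the weaker bound coming from $M^TM$ via Lemma~\ref{l:1} before discarding it in favor of Lemma~\ref{l:2}, whereas you go straight to $MM^T$ and are somewhat more explicit about the ordering and zero-padding conventions needed to make the indices match up; neither difference is substantive.
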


\begin{proof}
We start with Weyl's inequality applied to the two corresponding symmetric matrices $M^T M$ and $MM^T$, to derive bounds on perturbations of their eigenvalues, which are linked to the singular values of $M$ and $\widetilde{M}$.

Consider $M^T M$ and using the Lemma~\ref{l:1}, write down
\[
\|D_1\|_2 \leq \sqrt{\sum_{i=1}^k \sum_{j=1}^k \Bigl(\|A_i\|_2 \|E_j\|_2 + \|E_i\|_2 \|A_j\|_2 + \|E_i\|_2 \|E_j\|_2\Bigr)^2},
\]
where the perturbation term $D_1 = \widetilde{M}^T \widetilde{M} - M^T M$.

Applying Weyl's inequality to the eigenvalues of $M^T M$ and $\widetilde{M}^T \widetilde{M}$ leads to
\[
|\lambda_i(\widetilde{M}^T \widetilde{M}) - \lambda_i(M^T M)| \leq \|D_1\|_2.
\]

According to the Proposition~\ref{r:1}, the eigenvalues of $M^T M$ and $\widetilde{M}^T \widetilde{M}$ are squares of the singular values, i.e.,
\[
\lambda_i(\widetilde{M}^T \widetilde{M}) = \sigma_i^2(\widetilde{M}) \quad \text{and} \quad \lambda_i(M^T M) = \sigma_i^2(M) \, \Rightarrow \, |\sigma_i^2(\widetilde{M}) - \sigma_i^2(M)| \leq \|D_1\|_2.
\]

In similar way, according to Lemma~\ref{l:2}
\[
\|D_2\|_2 \leq \sum_{i=1}^k \Bigl(2 \|A_i\|_2 \|E_i\|_2 + \|E_i\|_2^2\Bigr),
\]
where $D_2 = \widetilde{M}\widetilde{M}^T - MM^T$.
Weyl's inequality implies
\[
|\lambda_i(\widetilde{M}\widetilde{M}^T) - \lambda_i(MM^T)| \leq \|D_2\|_2,
\]
and, again, due to Proposition~\ref{r:1}
\[
\lambda_i(\widetilde{M}\widetilde{M}^T) = \sigma_i^2(\widetilde{M}) \quad \text{and} \quad \lambda_i(MM^T) = \sigma_i^2(M) \, \Rightarrow \, |\sigma_i^2(\widetilde{M}) - \sigma_i^2(M)| \leq \|D_2\|_2.
\]

Because bounds provided by Lemma~\ref{l:2} (and hence $\|D_2\|_2$) are stronger than those of Lemma~\ref{l:1}, we continue the proof with $\|D_2\|_2$.

\paragraph{Deriving the Final Bound.}
For $\sigma_i(M) > 0, \, i=1,\dots, r$, note that
\[
\sigma_i^2(\widetilde{M}) - \sigma_i^2(M) = \bigl(\sigma_i(\widetilde{M}) - \sigma_i(M)\bigr) \bigl(\sigma_i(\widetilde{M}) + \sigma_i(M)\bigr).
\]

Since singular values are nonnegative and $\sigma_i(\widetilde{M}) + \sigma_i(M) > 0$,
\[
|\sigma_i(\widetilde{M}) - \sigma_i(M)| \leq \frac{\|D_2\|_2}{\sigma_i(\widetilde{M}) + \sigma_i(M)}.
\]

Moreover, because $\sigma_i(\widetilde{M}) + \sigma_i(M) > \sigma_i(M)$,
\[
|\sigma_i(\widetilde{M}) - \sigma_i(M)| \leq \frac{\|D_2\|_2}{\sigma_i(M)} \leq \frac{1}{\sigma_i(M)} \sum_{j=1}^k \Bigl(2 \|A_j\|_2 \|E_j\|_2 + \|E_j\|_2^2\Bigr).
\]

For indices $i = r+1, \dots, \min(m, kn)$, $\sigma_i(M) = 0$. In this case,
\[
\sigma_i^2(\widetilde{M}) \leq \|D_2\|_2,
\]
which implies
\[
\sigma_i(\widetilde{M}) \leq \sqrt{\|D_2\|_2} \leq \sqrt{\sum_{j=1}^k \Bigl(2 \|A_j\|_2 \|E_j\|_2 + \|E_j\|_2^2\Bigr)}.
\]
\end{proof}

Two important corollaries of Theorem~\ref{t:2} arise when the matrix $M$ is constructed by repeating a single matrix $A$, with perturbations affecting only a subset of the repetitions.

\begin{corollary}[Perturbation Around the Centroid]
\label{c:1}
Let $A \in \mathbb{R}^{m \times n}$ with rank $r = \operatorname{rank}(A)$. Consider perturbations $\{E_i\}_{i=2}^{k}$ with $E_i \in \mathbb{R}^{m \times n}$ for $i=2,\dots,k$ and define the concatenated matrix by
\[
M = [A, A, \dots, A] \in \mathbb{R}^{m \times kn},
\]
whereas the perturbed matrix is
\[
\widetilde{M} = [A, A+E_2, A+E_3, \dots, A+E_k] \in \mathbb{R}^{m \times kn}
\]
(here $E_1 \equiv 0$).

Then,
\[
|\sigma_i(\widetilde{M}) - \sqrt{k} \, \sigma_i(A)| \leq \frac{1}{\sqrt{k}\,\sigma_i(A)} \sum_{j=2}^{k} \Bigl(2 \|A\|_2 \|E_j\|_2 + \|E_j\|_2^2\Bigr), \quad i = 1, \dots, r,
\]
and
\[
\sigma_i(\widetilde{M}) \leq \sqrt{\sum_{j=2}^{k} \Bigl(2 \|A\|_2 \|E_j\|_2 + \|E_j\|_2^2\Bigr)}, \quad i = r+1, \dots, \min(m,kn).
\]
\end{corollary}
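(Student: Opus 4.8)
The plan is to apply Theorem~\ref{t:2} directly to the concatenated matrix $M = [A, A, \dots, A]$, treating the $i$-th block of $M$ as the common matrix $A$ and the perturbation block $E_i$ as given (with $E_1 = 0$). The only genuine work is to replace the quantities $\sigma_i(M)$ and $\|A_j\|_2$ that appear in the general bound by their explicit values in this repeated-block setting, and to discard the vanishing terms coming from the first block.

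First I would invoke Proposition~\ref{r:2}: since $M$ consists of $k$ copies of $A$, we have $\lambda_i(M^T M) = \lambda_i(M M^T) = k\,\lambda_i(A^T A)$ for $1 \le i \le r$, and hence, by Proposition~\ref{r:1}, $\sigma_i(M) = \sqrt{k}\,\sigma_i(A)$ for $1 \le i \le r$, while $\sigma_i(M) = 0$ for $i > r$. In particular $\operatorname{rank}(M) = \operatorname{rank}(A) = r$. Next I would note that each block of $M$ is $A$, so $\|A_j\|_2 = \|A\|_2$ for every $j$, and that $E_1 = 0$ contributes $2\|A\|_2\|E_1\|_2 + \|E_1\|_2^2 = 0$ to every sum, so the sums over $j = 1, \dots, k$ in Theorem~\ref{t:2} collapse to sums over $j = 2, \dots, k$.

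Substituting these into the first bound of Theorem~\ref{t:2} gives, for $i = 1, \dots, r$,
\[
|\sigma_i(\widetilde{M}) - \sqrt{k}\,\sigma_i(A)| \le \frac{1}{\sqrt{k}\,\sigma_i(A)} \sum_{j=2}^{k} \Bigl(2\|A\|_2\|E_j\|_2 + \|E_j\|_2^2\Bigr),
\]
and substituting into the second bound gives, for $i = r+1, \dots, \min(m, kn)$,
\[
\sigma_i(\widetilde{M}) \le \sqrt{\sum_{j=2}^{k} \Bigl(2\|A\|_2\|E_j\|_2 + \|E_j\|_2^2\Bigr)},
\]
which are exactly the claimed inequalities.

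I do not anticipate a real obstacle here: the corollary is essentially a specialization, and the only point requiring any care is the clean identification $\sigma_i(M) = \sqrt{k}\,\sigma_i(A)$ via Proposition~\ref{r:2} together with the observation that the general bound is invariant under dropping a zero perturbation block. If anything is delicate, it is merely bookkeeping on the index ranges — confirming that $\operatorname{rank}(M) = r$ so that the split between nonzero and zero singular values of $M$ matches the statement — but this follows immediately from Proposition~\ref{r:2}.
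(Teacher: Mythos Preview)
Your proposal is correct and follows essentially the same approach as the paper: apply Theorem~\ref{t:2} with $A_j = A$ and $E_1 = 0$, then use Propositions~\ref{r:1} and~\ref{r:2} to rewrite $\sigma_i(M)$ as $\sqrt{k}\,\sigma_i(A)$. If anything, you are slightly more explicit than the paper in verifying that $\operatorname{rank}(M) = r$ and that the $j=1$ term drops out, but the argument is the same.
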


\begin{proof}
The result follows from Theorem~\ref{t:2}. Because $A_j = A$ for all $j$ and $E_1 \equiv 0$, the bound in Theorem~\ref{t:2} becomes
\[
|\sigma_i(\widetilde{M}) - \sigma_i(M)| \leq \frac{1}{\sigma_i(M)} \sum_{j=2}^{k} \Bigl(2 \|A\|_2 \|E_j\|_2 + \|E_j\|_2^2\Bigr).
\]
Furthermore, from the Propositions~\ref{r:1} and \ref{r:2} it follows that
\[
\sigma_i^2(M) = \lambda_i(M^T M) = k\,\lambda_i(A^T A) = k\,\sigma_i^2(A) \,\, \Rightarrow \,\, \sigma_i(M) = \sqrt{k}\,\sigma_i(A).
\]
Substituting this into the previous inequality yields the result.
\end{proof}

Immediate consequence of Corollary~\ref{c:1} is a continuity result. Namely, if each perturbation is small, then the singular values of the perturbed matrix converge to the singular values of the original (scaled) matrix.

\begin{corollary}[Continuity of Singular Values under Small Perturbations]
\label{c:2}
Suppose that $\|E_j\|_2 < \epsilon$ for all $j$. Then,
\[
\lim_{\epsilon \to 0} \sigma_i(\widetilde{M}) = \sqrt{k}\,\sigma_i(A) \quad i = 1, \dots , r,
\]
and
\[
\lim_{\epsilon \to 0} \sigma_i(\widetilde{M}) = 0 \quad i = r + 1, \dots , \min(m,kn).
\]
\end{corollary}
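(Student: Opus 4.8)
The plan is to read off Corollary~\ref{c:2} directly from the two bounds established in Corollary~\ref{c:1}, treating the regime $i\le r$ and the regime $i>r$ separately. Both parts are essentially a squeeze argument on the right-hand sides, so no new machinery is needed; the only care required is in handling the $1/(\sqrt{k}\,\sigma_i(A))$ factor in the first bound and in confirming that the left-hand sides are genuinely nonnegative quantities that can be squeezed.

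First I would treat the nonzero singular values $i=1,\dots,r$. Since $A$ and $k$ are fixed, $\sigma_i(A)>0$ for $i\le r$, so the factor $\tfrac{1}{\sqrt{k}\,\sigma_i(A)}$ is a finite constant independent of $\epsilon$. The hypothesis $\|E_j\|_2<\epsilon$ for all $j$ gives
\[
0 \;\le\; \bigl|\sigma_i(\widetilde{M}) - \sqrt{k}\,\sigma_i(A)\bigr|
\;\le\; \frac{1}{\sqrt{k}\,\sigma_i(A)} \sum_{j=2}^{k}\bigl(2\|A\|_2\,\epsilon + \epsilon^2\bigr)
\;=\; \frac{(k-1)\bigl(2\|A\|_2\,\epsilon + \epsilon^2\bigr)}{\sqrt{k}\,\sigma_i(A)},
\]
and the rightmost expression tends to $0$ as $\epsilon\to 0$. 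By the squeeze theorem $|\sigma_i(\widetilde{M}) - \sqrt{k}\,\sigma_i(A)|\to 0$, i.e. $\sigma_i(\widetilde{M})\to\sqrt{k}\,\sigma_i(A)$, which is the first claim.

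For the second regime $i=r+1,\dots,\min(m,kn)$ I would use the other bound of Corollary~\ref{c:1}: $0\le\sigma_i(\widetilde{M})\le\sqrt{\sum_{j=2}^{k}(2\|A\|_2\,\epsilon+\epsilon^2)}=\sqrt{(k-1)(2\|A\|_2\,\epsilon+\epsilon^2)}$, and again the upper bound goes to $0$ with $\epsilon$, so $\sigma_i(\widetilde{M})\to 0$ by squeezing. I do not anticipate a genuine obstacle here — the statement is a soft consequence of the quantitative bounds; the only point worth stating explicitly (and the closest thing to a subtlety) is that all the constants $\|A\|_2$, $\sigma_i(A)$, $k$ are held fixed while $\epsilon\to 0$, so the prefactors do not interfere with the limit, and that for $i\le r$ one needs $\sigma_i(A)>0$ to make sense of the bound, which is exactly why the index range is restricted to the rank.
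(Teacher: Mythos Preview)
Your proposal is correct and follows essentially the same approach as the paper: both plug the hypothesis $\|E_j\|_2<\epsilon$ into the two bounds of Corollary~\ref{c:1}, obtain the explicit upper bounds $\tfrac{(k-1)(2\|A\|_2\epsilon+\epsilon^2)}{\sqrt{k}\,\sigma_i(A)}$ and $\sqrt{(k-1)(2\|A\|_2\epsilon+\epsilon^2)}$, and let $\epsilon\to 0$.
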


\begin{proof}
Corollary~\ref{c:1} leads to
\begin{align*}
|\sigma_i(\widetilde{M}) - \sqrt{k}\,\sigma_i(A)| &\leq \frac{1}{\sqrt{k}\,\sigma_i(A)} \sum_{j=2}^{k} \Bigl(2 \|A\|_2 \|E_j\|_2 + \|E_j\|_2^2\Bigr)\\[1mm]
&\le \frac{1}{\sqrt{k}\,\sigma_i(A)} \sum_{j=2}^{k} \Bigl(2 \|A\|_2 \epsilon + \epsilon^2\Bigr)\\[1mm]
&= \frac{(k-1)\epsilon}{\sqrt{k}\,\sigma_i(A)} \Bigl(2 \|A\|_2 + \epsilon\Bigr), \quad i = 1, \dots, r.
\end{align*}
Therefore, it follows that
\[
\lim_{\epsilon \to 0} |\sigma_i(\widetilde{M}) - \sqrt{k}\,\sigma_i(A)| = 0.
\]

For $i = r+1, \dots, \min(m,kn)$, Corollary~\ref{c:1} implies
\begin{align*}
\sigma_i(\widetilde{M}) &\leq \sqrt{\sum_{j=2}^{k} \Bigl(2 \|A\|_2 \|E_j\|_2 + \|E_j\|_2^2\Bigr)}
\le \sqrt{\sum_{j=2}^{k} \Bigl(2 \|A\|_2 \epsilon + \epsilon^2\Bigr)} \\
&= \sqrt{(k-1)\epsilon \Bigl(2 \|A\|_2 + \epsilon\Bigr)} \,\, \Rightarrow \,\, \lim_{\epsilon \to 0} \sigma_i(\widetilde{M}) = 0.
\end{align*}
\end{proof}

Modern sensing and communication systems often produce hundreds of matrix-valued observations that share a nearly identical column space. A natural strategy to exploit this redundancy is to \emph{concatenate} multiple blocks and store a \emph{single} joint rank $r$ singular value decomposition (SVD), rather than computing separate truncated SVDs for each block. This leads to a tradeoff:

\begin{quote}
\textbf{Storage efficiency.}  
Concatenating more blocks reduces the number of scalars required per stored singular vector.

\textbf{Spectral perturbation.}  
However, concatenating too many noisy blocks perturbs the leading singular values and singular vectors, potentially degrading downstream tasks that rely on them.
\end{quote}

The task consists of designing a \emph{computationally efficient rule} that determines \emph{in advance} how many blocks can be safely merged without exceeding a prescribed absolute error tolerance~$\tau$ for the top singular values. This is addressed in the third corollary.

\begin{definition}[Spectral budget]\label{def:spectral-budget}
Fix a target rank \(r\in\mathbb{N}\) and an absolute tolerance \(\tau>0\).
We call the pair \((r,\tau)\) the \emph{spectral budget} and say that a perturbed matrix \(\widetilde M\) satisfies the
\emph{\((r,\tau)\)-spectral budget} with respect to a reference matrix
\(M\) if
\[
  \bigl|\sigma_i(\widetilde M)-\sigma_i(M)\bigr|\;\le\;\tau,
  \qquad i=1,\dots,r.
\]
\end{definition}

\begin{corollary}[Maximum concatenation length under a \((r,\tau)\)-spectral budget]
\label{c:kmax}
Let $A_0\in\mathbb{R}^{m\times n}$ be fixed and consider
\[
    M_{1:k}       = \bigl[A_0,\dots,A_0\bigr]\in\mathbb{R}^{m\times kn},
    \qquad
    \widetilde M_{1:k}= \bigl[A_0,\;A_0+E_2,\dots,A_0+E_k\bigr]\in\mathbb{R}^{m\times kn}, \quad k\ge 1.
\]

Fix a \((r,\tau)\)-spectral budget, which should hold
\begin{equation}\label{eq:sv-budget}
    \bigl|\sigma_i(\widetilde M_{1:k})-\sigma_i(M_{1:k})\bigr|\le\tau,
    \quad i=1,\dots,r.
\end{equation}

Then the largest group size $k_{\max}$ for which \eqref{eq:sv-budget} can be guaranteed is
\begin{equation}\label{eq:kmax-def}
    k\le k_{\max}(\tau)
    :=\left(
        \frac{\tau\,\sigma_r(A_0)}
             {2\|A_0\|_2\,\bar\varepsilon(k)+\bar\varepsilon(k)^2}
      \right)^{\!2},
\end{equation}
where \(
  \bar\varepsilon(k):=\max_{2\le j\le k}\|E_j\|_2.
\)
\end{corollary}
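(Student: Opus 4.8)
The plan is to start from Corollary~\ref{c:1}, which already gives, for each $i=1,\dots,r$,
\[
  \bigl|\sigma_i(\widetilde M_{1:k})-\sqrt{k}\,\sigma_i(A_0)\bigr|
  \le
  \frac{1}{\sqrt{k}\,\sigma_i(A_0)}
       \sum_{j=2}^{k}\bigl(2\|A_0\|_2\|E_j\|_2+\|E_j\|_2^2\bigr).
\]
Since $\sigma_i(M_{1:k})=\sqrt{k}\,\sigma_i(A_0)$ by Proposition~\ref{r:2}, the left-hand side is exactly the quantity appearing in the $(r,\tau)$-budget condition~\eqref{eq:sv-budget}. So it suffices to bound the right-hand side by $\tau$ uniformly over $i=1,\dots,r$. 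First I would replace each $\|E_j\|_2$ by $\bar\varepsilon(k)=\max_{2\le j\le k}\|E_j\|_2$, which makes the summand independent of $j$, and bound the number of nonzero terms $k-1$ by $k$, giving the cruder but cleaner bound
\[
  \bigl|\sigma_i(\widetilde M_{1:k})-\sigma_i(M_{1:k})\bigr|
  \le
  \frac{k\bigl(2\|A_0\|_2\,\bar\varepsilon(k)+\bar\varepsilon(k)^2\bigr)}
       {\sqrt{k}\,\sigma_i(A_0)}
  =
  \frac{\sqrt{k}\,\bigl(2\|A_0\|_2\,\bar\varepsilon(k)+\bar\varepsilon(k)^2\bigr)}
       {\sigma_i(A_0)}.
\]
Next I would make this worst-case over $i$ by replacing $\sigma_i(A_0)$ with the smallest nonzero singular value $\sigma_r(A_0)\le\sigma_i(A_0)$ for all $i\le r$, so the whole family of inequalities is implied by the single inequality
\[
  \frac{\sqrt{k}\,\bigl(2\|A_0\|_2\,\bar\varepsilon(k)+\bar\varepsilon(k)^2\bigr)}
       {\sigma_r(A_0)}\;\le\;\tau.
\]
Finally I would solve this for $k$: rearranging gives $\sqrt{k}\le \dfrac{\tau\,\sigma_r(A_0)}{2\|A_0\|_2\,\bar\varepsilon(k)+\bar\varepsilon(k)^2}$, and squaring both sides (both sides nonnegative) yields precisely the claimed bound $k\le k_{\max}(\tau)$ in~\eqref{eq:kmax-def}. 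Since every step was an inequality in the safe direction, any $k$ satisfying $k\le k_{\max}(\tau)$ guarantees~\eqref{eq:sv-budget}, which is the assertion.

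The one genuinely delicate point is that $\bar\varepsilon(k)$ itself depends on $k$ — enlarging the group can only enlarge the max of the block norms — so $k_{\max}(\tau)$ is defined by an implicit inequality rather than a closed form, and one should note that the right-hand side of~\eqref{eq:kmax-def} is non-increasing in $k$ through $\bar\varepsilon(k)$ while the left-hand side increases, so the admissible set of $k$ is an initial segment $\{1,\dots,k_{\max}\}$ and the bound is self-consistent. I would add a sentence making this monotonicity explicit so that the ``largest $k$'' in the statement is well-defined; everything else is the routine chain of substitutions above. A minor caveat worth flagging is the degenerate case $\sigma_r(A_0)=0$ (i.e. $\operatorname{rank}(A_0)<r$), where the target rank exceeds the available rank and the bound is vacuous — here one restricts to $r\le\operatorname{rank}(A_0)$ as is implicit in the setup of Corollary~\ref{c:1}.
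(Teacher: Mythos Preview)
Your proposal is correct and follows essentially the same route as the paper: invoke Corollary~\ref{c:1} (with $\sigma_i(M_{1:k})=\sqrt{k}\,\sigma_i(A_0)$ from Proposition~\ref{r:2}), replace each $\|E_j\|_2$ by $\bar\varepsilon(k)$, bound $k-1\le k$, pass to the worst index via $\sigma_i(A_0)\ge\sigma_r(A_0)$, and solve $\sqrt{k}(2\|A_0\|_2\bar\varepsilon(k)+\bar\varepsilon(k)^2)\le\tau\,\sigma_r(A_0)$ for $k$. Your explicit remarks on the monotonicity of $\bar\varepsilon(k)$ and the degenerate case $\sigma_r(A_0)=0$ are welcome additions that the paper treats more briefly.
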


\begin{proof}
Because $M_{1:k}$ repeats $A_0$ horizontally, from the Proposition~\ref{r:2}
\[
  \sigma_i(M_{1:k})=\sqrt{k}\,\sigma_i(A_0), \quad i=1,\dots , r.
\]
According to Corollary~\ref{c:1},
\[
  \bigl|\sigma_i(\widetilde M_{1:k})-\sqrt{k}\,\sigma_i(A_0)\bigr|
  \;\le\;
  \frac{1}{\sqrt{k}\,\sigma_i(A_0)}
  \sum_{j=2}^{k}\!\bigl(2\|A_0\|_2\|E_j\|_2+\|E_j\|_2^2\bigr),
  \quad i=1,\dots,r.
\]
The inequality $\|E_j\|_2\le\bar\varepsilon(k)$ implies
\[
  \sum_{j=2}^{k}\!\bigl(2\|A_0\|_2\|E_j\|_2+\|E_j\|_2^2\bigr)
  \le(k-1)\bigl(2\|A_0\|_2\bar\varepsilon(k)+\bar\varepsilon(k)^2\bigr)
  \le k\bigl(2\|A_0\|_2\bar\varepsilon(k)+\bar\varepsilon(k)^2\bigr).
\]
Hence, because $\sigma_i(A_0)\ge\sigma_r(A_0)$,
\begin{align*}
    \bigl|\sigma_i(\widetilde M_{1:k})-\sqrt{k}\,\sigma_i(A_0)\bigr|
  &\le
  \frac{1}{\sqrt{k}\,\sigma_i(A_0)}
    \sum_{j=2}^{k}\!\bigl(2\|A_0\|_2\|E_j\|_2+\|E_j\|_2^2\bigr) \\
  &\le
  \frac{\sqrt{k}\bigl(2\|A_0\|_2\bar\varepsilon(k)+\bar\varepsilon(k)^2\bigr)}
       {\sigma_r(A_0)}, \quad i=1,\dots , r.
\end{align*}
Requiring the right-hand side to be $\le\tau$ yields
\[
  \sqrt{k}\bigl(2\|A_0\|_2\bar\varepsilon(k)+\bar\varepsilon(k)^2\bigr)
  \le\tau\,\sigma_r(A_0).
\]
Squaring both sides and rearranging gives the claimed bound $k\le k_{\max}(\tau)$ of \eqref{eq:kmax-def}.  
Monotonicity of $\bar\varepsilon(k)$ implies monotonicity of $k_{\max}(\tau)$, that completes the proof.
\end{proof}

Because \(\bar\varepsilon(k)\) does not decrease by \(k\), the right‑hand
side of \eqref{eq:kmax-def} decreases as we append blocks. Therefore, the first violation of the inequality
\eqref{eq:kmax-def} gives the maximal group size which is \emph{guaranteed} to keep every singular value within the absolute tolerance~$\tau$ defined in~\eqref{eq:sv-budget}. This simple formula tells the practitioner how many channels to merge \emph{before} running a large‑scale SVD.

\section{Conclusion}

We extended classical perturbation theory to concatenated matrices by deriving singular value perturbation bounds that leverage the block structure.

The key contributions are as follows:
\begin{itemize}
    \item In Section \ref{sec:bond} we established upper bounds for the perturbation errors in both \(M^TM\) and \(MM^T\) of a concatenated matrix \(M\), demonstrating that the perturbation bound derived from \(MM^T\) is generally stronger.
    \item In Section \ref{sec:main} a comprehensive perturbation bound for the singular values of a concatenated matrix was obtained. This bound quantitatively relates the deviations in singular values to the individual block perturbations, providing clear criteria under which the stability of the singular spectrum is maintained. We also illustrated the practical implication of our theoretical results through a clustering scenario, deriving a formula that could be used in joint-compression algorithms to determine the number of matrices to concatenate before running SVD.
\end{itemize}

The explicit perturbation bounds developed in this paper not only enhance our theoretical toolkit for analyzing concatenated matrices but also provide practical guidelines for ensuring the stability of algorithms in diverse application domains.

\section*{Acknowledgments}
The authors confirm that there is no conflict of interest and acknowledges financial support by the Simons Foundation grant (SFI-PD-Ukraine-00014586, M.S.) and the
project 0125U000299 of the National Academy of Sciences of Ukraine. We also express our gratitude to the Armed Forces of Ukraine for their protection, which has made this research possible.

\bibliographystyle{plain}
\bibliography{references}
\end{document}